\documentclass{article}


\usepackage[preprint]{neurips_2023}




\usepackage[utf8]{inputenc} 
\usepackage[T1]{fontenc}    
\usepackage[colorlinks,
linkcolor=red,
anchorcolor=blue,
citecolor=blue]{hyperref}
\usepackage{url}            
\usepackage{booktabs}       
\usepackage{amsfonts}       
\usepackage{nicefrac}       
\usepackage{microtype}      
\usepackage{xcolor}         
\usepackage{csquotes}
\usepackage{array}
\usepackage{multirow}
\usepackage{pifont}
\usepackage{paralist,bbding}
\usepackage{enumitem}
\usepackage{amsmath}
\usepackage{tikz}
\usepackage{needspace}
\usepackage{lipsum}
\usepackage[table]{colortbl}
\usepackage{threeparttable}
\usepackage[square, comma, sort&compress, numbers]{natbib}

\usepackage{amsmath,amsfonts,amssymb}
\usepackage{amsthm}
\usepackage{graphicx}
\usepackage{color}
\usepackage[normalem]{ulem}
\usepackage{diagbox}
\usepackage{makecell}
\usepackage{multicol}
\usepackage{multirow}
\usepackage{tabularx}
\usepackage{paralist}
\usepackage{array}
\usepackage{setspace}
\usepackage{wrapfig}

\usepackage{listings}
\usepackage{color}
\usepackage[misc]{ifsym}

\definecolor{dkgreen}{rgb}{0,0.6,0}
\definecolor{gray}{rgb}{0.5,0.5,0.5}
\definecolor{mauve}{rgb}{0.58,0,0.82}

\lstset{frame=tb,
  language=Python,
  aboveskip=3mm,
  belowskip=3mm,
  showstringspaces=false,
  columns=flexible,
  basicstyle={\small\ttfamily},
  numbers=none,
  numberstyle=\tiny\color{gray},
  keywordstyle=\color{blue},
  commentstyle=\color{dkgreen},
  stringstyle=\color{mauve},
  breaklines=true,
  breakatwhitespace=true,
  tabsize=3
}

\newcommand\ie{\emph{i.e.}}

\title{You Only Scan Once: Efficient Multi-dimension Sequential Modeling with LightNet}

%

\author{
{
$^1$Zhen Qin, $^3$Yuxin Mao, $^2$Xuyang Shen, $^2$Dong Li, $^4$Jing Zhang, $^3$Yuchao Dai,
$^2$Yiran Zhong$^\textrm{\Letter}$
}\\
$^1$Taptap, $^2$OpenNLPLab, Shanghai AI Laboratory,\\$^3$Northwestern Polytechnical University, $^4$Australian National University \\
\texttt{https://github.com/OpenNLPLab/LightNet} 
}

\newcommand\blfootnote[1]{%
  \begingroup
  \renewcommand\thefootnote{}\footnote{#1}%
  \addtocounter{footnote}{-1}%
  \endgroup
}

\begin{document}

\maketitle

\begin{abstract}
Linear attention mechanisms have gained prominence in causal language models due to their linear computational complexity and enhanced speed. However, the inherent decay mechanism in linear attention presents challenges when applied to multi-dimensional sequence modeling tasks, such as image processing and multi-modal learning. In these scenarios, the utilization of sequential scanning to establish a global receptive field necessitates multiple scans for multi-dimensional data, thereby leading to inefficiencies. This paper identifies the inefficiency caused by a \enquote{multiplicative} linear recurrence and proposes an efficient alternative \enquote{additive} linear recurrence to avoid the issue, as it can handle multi-dimensional data within a single scan. We further develop an efficient multi-dimensional sequential modeling framework called LightNet based on the new recurrence. Moreover, we present two new multi-dimensional linear relative positional encoding methods, MD-TPE and MD-LRPE to enhance the model's ability to discern positional information in multi-dimensional scenarios. Our empirical evaluations across various tasks, including image classification, image generation, bidirectional language modeling, and autoregressive language modeling, demonstrate the efficacy of LightNet, showcasing its potential as a versatile and efficient solution for multi-dimensional sequential modeling. \blfootnote{\noindent $^\textrm{\Letter}$ Indicates corresponding author (Email address: \textit{zhongyiran@gmail.com}).}
\end{abstract}

\section{Introduction}
Linear attention has emerged as an effective alternative to softmax attention due to its linear computational complexity and enhanced processing speed, especially in causal language models~\citep{peng_rwkv4_2024,2307.14995}. The benefits of linear attention largely depend on its decay mechanism~\citep{peng_rwkv4_2024,2307.14995,retnet}, which prevents attention dilution~\citep{qin_transnormer_emnlp_2022} and facilitates global interaction among tokens. However, the decay mechanism presents two primary issues: 
First, the decay mechanism is not easily applicable to high-dimensional inputs due to the need for multiple sequential scans to establish a global multi-dimensional receptive field, which reduces computational efficiency~\citep{duan_visionRWKV_2024,zhu_visionmamba_2024}. 
Additionally, without the decay mechanism, linear attention lacks positional awareness during computations, leading to decreased performance~\citep{qin_transnormer_emnlp_2022}. In light of these challenges, we are investigating the feasibility of reducing sequential scans for multi-dimensional scenarios while preserving performance.

We first analyze the types of linear recurrence and divide them into two categories: \emph{multiplicative} and \emph{additive}. In multiplicative recurrence, the decay rate is dependent only on the current moment, making it impossible to obtain information about subsequent moments with a single scan. By taking image processing as an example, using multiplicative recurrence will require at least two scans to retrieve the global information~\citep{duan_visionRWKV_2024,zhu_visionmamba_2024}.  Conversely, in additive recurrence, the decay rate depends on all moments through the summation of the importance score of each moment, enabling it to gather global information in a single scan.

It is important to note that in non-causal situations, additive recurrence is permutation-invariant, which means it lacks local precedence and therefore diminishes the capture of positional information. To overcome this limitation, we put forth a new approach to positional encoding called Multi-Dimensional Toeplitz Positional Encoding (MD-TPE). This method utilizes the mathematical properties of the Toeplitz matrix to embed relative positional information with linear time complexity, thus ensuring efficiency in multi-dimensional scenarios. Additionally, we expand the Linearized Relative Positional Encoding (LRPE)~\citep{qin2023linearized} to high-dimensional scenarios, resulting in the creation of Multi-Dimensional Linearized Relative Positional Encoding (MD-LRPE).

We then present LightNet, a new multi-dimensional linear attention model built on additive recurrence. LightNet features a pioneering decay mechanism, allowing for efficient single-scan processing of high-dimensional sequential data. Furthermore, it integrates highly effective multi-dimensional position encoding such as MD-TPE and MD-LRPE to precisely capture positional information.

We conduct several evaluations of the performance of our proposed LightNet on a range of tasks, including image generation, image classification, image generation, bidirectional language modeling, and autoregressive language modeling. LightNet performs comparably or better than its competitors across all tasks.

We summarize our main contributions as follows:
\begin{compactitem}
    \item We analyze the types of linear recurrence, dividing them into two types: \emph{multiplicative} and \emph{additive}, where the additive type can obtain global information in a single scan.
    \item We propose two multi-dimensional position encoding strategies, MD-TPE and MD-LRPE, to effectively capture positional information in multi-dimensional scenarios.
    \item We propose LightNet, a new multi-dimensional linear attention model that can process high-dimensional sequences in a single scan.
    \item We conduct thorough evaluations to assess the efficiency and efficacy of LightNet for multi-dimensional sequential modeling tasks. The LightNet demonstrates competitive performance in all scenarios.
\end{compactitem}

\section{Related Work}
\vspace{-3mm}
\paragraph{Linear Attention.}
The linear attention mechanism has greatly advanced deep learning, particularly in natural language processing, by providing a scalable solution for long input sequences and reducing the computational demands of traditional attention models~\cite{choromanski_performer_iclr_2020,katharopoulos_icml_transformerrnn_icml_2020,qin_cosformer_iclr_2021,qin_tnn_2022}. However, despite its faster training speeds, linear attention's performance still falls short of softmax attention due to the attention dilution issue~\citep{qin_transnormer_emnlp_2022}. The TNL/RetNet~\cite{qin_transnormer_emnlp_2022,2307.14995} introduces a decay mechanism to address this problem. Additionally, GLA~\cite{yang_GLA_2023} incorporating gating mechanisms show the potential to enhance linear attention models. 

\vspace{-3mm}
\paragraph{State Space Model.}
State Space Models (SSMs) are increasingly crucial in sequence modeling due to their structured approach to capturing temporal dynamics through latent variables. 
The S4 model~\cite{gu_S4_ICLR_2021} enhances state space modeling for long sequences by leveraging structured spaces to improve computational efficiency and tackle complex dynamics. 
With additional parameterizing and initializing diagonal state space strategy~\cite{gu_S4D_nips_2022}, the SSMs can achieve comparable performance to naive transformers.
Furthermore, the Gated State Space (GSS) model~\cite{mehta_gss_iclr_2023} introduces a gating mechanism to SSMs, which is particularly effective for long-range language modeling by allowing nuanced control over information flow. 
The S5 model~\cite{smith_s5_iclr_2022} reduces complexity using \enquote{scan} while maintaining the capability to handle intricate sequences.
However, directly extending the SSM to multi-dimensional input usually requires multiple sequential scans, which will reduce the computational efficiency~\citep{zhu_visionmamba_2024}.

\vspace{-3mm}
\paragraph{Linear RNN.}
Linear RNNs employ element-wise recursion for sequence modeling, and due to their linear recursive form, they can be accelerated using parallel scans~\citep{iclr18}. 
At their core is the decay mechanism, where RWKV-4/LRU~\cite{peng_rwkv4_2024,lru} utilizes data-independent decay.
HGRN~\cite{qin_hgrn_nips_2024,qin2024hgrn2} leverage data-dependent decay to enhance performance. 
Linear RNNs have shown considerable potential in language modeling and long-sequence modeling tasks.
\vspace{-3mm}
\paragraph{Multi-dimensional Tasks with Linear Complexity Model.}
The development of linear attention in language models has led to its extension into multi-dimensional tasks. Building upon the cosFormer framework~\cite{qin_cosformer_iclr_2021}, VVT~\cite{sun_vvt_pami_2023} explores a local prior of 2D linear attention and applies it to image classification tasks. 
Vim~\cite{zhu_visionmamba_2024} and Vision-RWKV~\cite{duan_visionRWKV_2024} utilize a sequential scan mechanism to expand Mamba~\cite{gu2023mamba} and RWKV~\cite{peng_rwkv_emnlp_2023} for image classification. 
Additionally, leveraging the benefits structure of the diffusion transformer~\cite{peebles_dit_iccv_2023} in image generation, several works have extended linear complexity models into 2D space~\cite{fei_dis_2024,fei_diffusionRWKV_2024,Yan_DiffusionSSM_2023,hu2024zigma} to replace the traditional transformer architecture, achieving efficient image generation.
However, some of these tasks encounter issues with inadequate performance. 
Moreover, frequent sequential scans can compromise the efficiency of the model.

\section{Linear Recurrence in Multi-dimensional Space}
\vspace{-2mm}
In this section, we discuss the theoretical and practical computational complexity of linear recurrence (with decay) when dealing with high-dimensional data, and then analyze the types of linear recurrence. In subsequent discussions, we assume $n$ is the sequence length, $d$ is the embedding dimension, and $\mathbf x_t\in \mathbb R^{d}$ is the transpose of the $t$-th row of matrix $\mathbf X\in \mathbb R^{n\times d}$.
\begin{figure}[t]
  \centering
  \setlength{\abovecaptionskip}{0.cm}
    \includegraphics[width=0.49\textwidth]{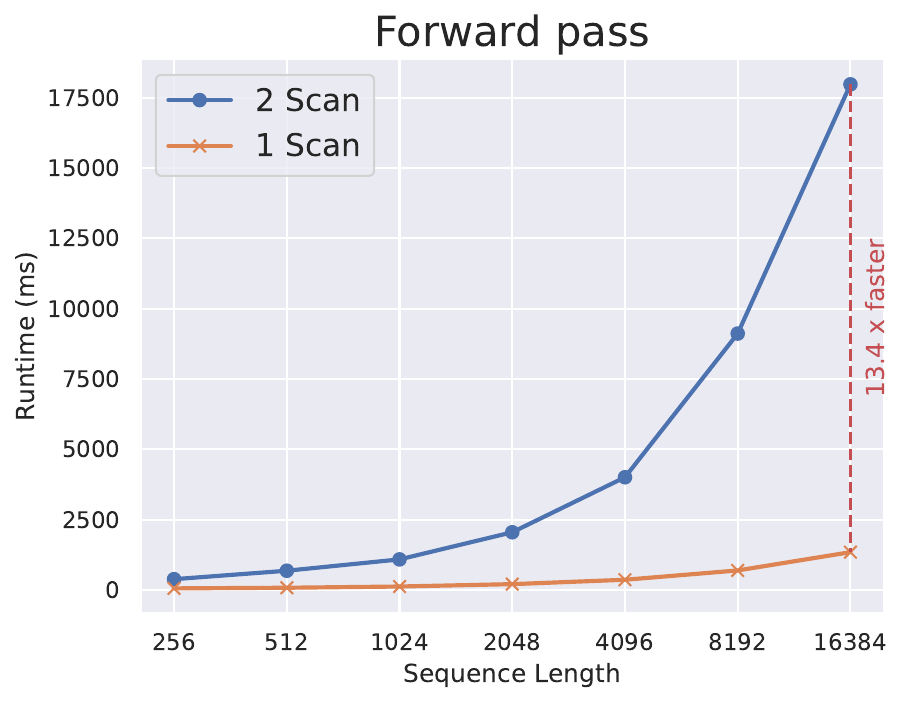}
    \includegraphics[width=0.49\textwidth]{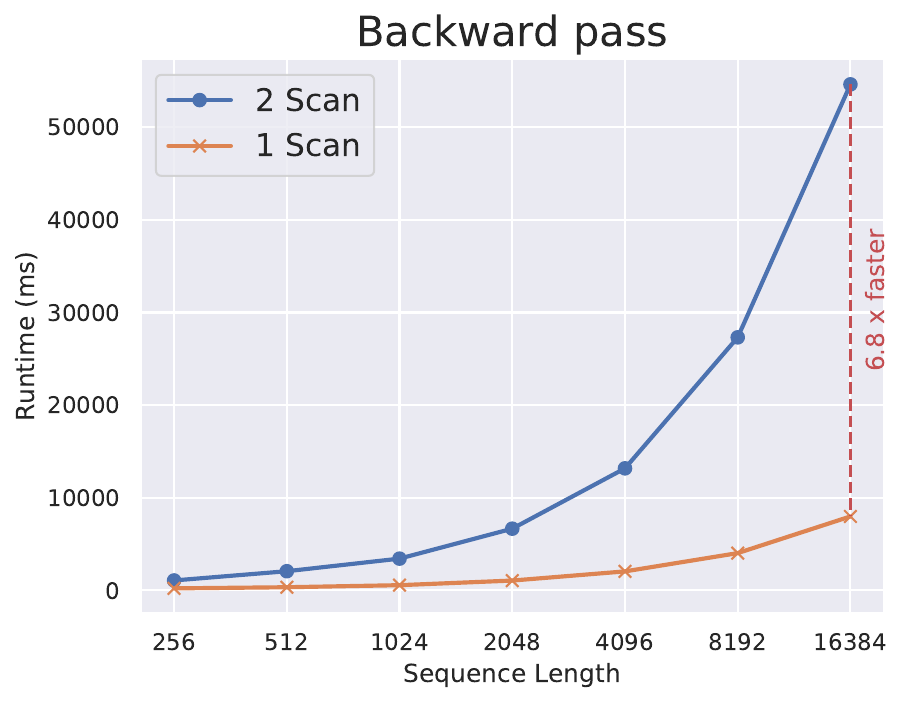}
    \caption{
     \textbf{Processing time of 1 Scan and 2 Scan in relation to sequence length.} 1 Scan is significantly faster than 2 Scan in both forward and backward passes. As the sequence length increases, the advantage of 1 Scan becomes more substantial.}
    \label{1vs2 scan}
    \vspace{-4mm}
\end{figure}

\subsection{Computational Complexity of Linear Recurrence}
It has been proved that all linear attention, state space model, and linear RNN can be expressed using a linear recurrence formula~\citep{qin2024unified}. We use linear attention with decay~\citep{2307.14995,retnet} as an example. Below is the recursive form (\ie~a scan):
$$
\mathbf{k} \mathbf{v}_0=\mathbf{0}, \mathbf{k} \mathbf{v}_t=\lambda_t \mathbf{k} \mathbf{v}_{t-1}+\mathbf{k}_t \mathbf{v}_t^{\top}, \mathbf{o}_t^{\top}=\mathbf{q}_t^{\top} \mathbf{k} \mathbf{v}_{\mathbf{t}},t=1,\ldots, n.
$$
Here, $0< \lambda_t \le 1$ is the decay rate. Note that the above formula is for the causal scenario. When dealing with non-causal scenarios, a common practice in the literature is to perform causal computation twice~\citep{duan_visionRWKV_2024,zhu_visionmamba_2024}. We call this method \enquote{2 scan}:
\begin{equation*}
\begin{gathered}
\overset{\rightarrow}{\mathbf{kv}}_0=\mathbf{0}, \overset{\rightarrow}{\mathbf{kv}}_t=\lambda_t \overset{\rightarrow}{\mathbf{kv}}_{t-1}+\mathbf{k}_t \mathbf{v}_t^{\top},
\overset{\rightarrow}{\mathbf{o}}_t^{\top}=\mathbf{q}_t^{\top}  \overset{\rightarrow}{\mathbf{kv}}_t,\\
\overset{\leftarrow}{\mathbf{kv}}_{n+1}=\mathbf{0}, \overset{\leftarrow}{\mathbf{kv}}_t=\lambda_t \overset{\leftarrow}{\mathbf{kv}}_{t+1}+\mathbf{k}_t \mathbf{v}_t^{\top},
\overset{\leftarrow}{\mathbf{o}}_t^{\top}=\mathbf{q}_t^{\top}  \overset{\leftarrow}{\mathbf{kv}}_t,\\
\mathbf o_t = \overset{\rightarrow}{\mathbf{o}}_t + \overset{\leftarrow}{\mathbf{o}}_t.
\end{gathered}
\end{equation*}
When $\lambda_t=1$, \ie~there is no decay, the right product trick~\citep{katharopoulos_icml_transformerrnn_icml_2020} can be used in this case. We call this method \enquote{1 scan}.
$$
[\mathbf {KV}] = \mathbf K^\top \mathbf V, \mathbf O = \mathbf Q [\mathbf {KV}].
$$
Although both of the above formulas have a time complexity of $O(nd^2)$, the \enquote{2 scan} version is significantly slower than the \enquote{1 scan} version. This is because causal computation requires block-level recursion~\citep{2401.04658,yang_GLA_2023}, whereas the second formula can be fully parallelized due to matrix multiplication~\citep{katharopoulos_icml_transformerrnn_icml_2020}. We provide a speed comparison in Fig.~\ref{1vs2 scan}, where the \enquote{2 scan} is implemented with Lightning Attention~\cite{2401.04658}, the fastest linear attention implementation so far. It can be seen that the \enquote{2 scan} is several times slower than the \enquote{1 scan} in both forward and backward passes.

It is apparent that the need for multiple scans is mainly due to the presence of decay $\lambda_t$. 
However, directly removing $\lambda_t$ would lead to degraded performance~\cite{qin_transnormer_emnlp_2022}. 
A natural question arises: \textit{can we retain $\lambda_t$ while only performing a single scan?}
In the next section, we will discuss the types of linear recurrence and answer the question.

\subsection{Types of Linear Recurrence}

We first explore the representation range of linear recurrences by 1D linear recurrence:
\begin{equation}
y_t = a_t y_{t-1} + x_t,  y_0 =0.
\label{equ:linear_Recurrence}
\end{equation}

Unroll the recursion equation of E.q~\ref{equ:linear_Recurrence}, we obtain:
\begin{equation}
y_t = \sum_{s=1}^{t} \frac{ A_s}{ A_t} x_s
\triangleq  \sum_{s=1}^{t} c_{ts} x_s
,A_t =\left (\prod_{s=1}^t a_s \right)^{-1}.
\label{equ:unroll_lr}
\end{equation}

The detailed proof of the unrolling process can be found in Appendix~\ref{proof:linear}.
Note that $y_t$ is a linear combination of $x_1, \ldots, x_t$. 
A natural question arises: \textit{Can every linear combination $\sum_{s=1}^{t} c_{ts} x_s$ be represented as a linear recursion?} 
We now prove that a linear recursion representation is possible only when the coefficients $c_{ts}$ satisfy certain conditions.

\newtheorem{linear-recurrence}{Theorem}[section]
\begin{linear-recurrence}
\label{thm:linear-recurrence}
A linear recurrence $y_t=a_t y_{t-1}+x_t, y_0=0$ is equivalent to a linear combination $y_t=\sum_{s=1}^t c_{ts} x_s$, iff $c_{ts}=\frac{g(s)}{g(t)}$, where $g(\cdot)$ is a function.
\end{linear-recurrence}

\begin{proof}[Proof of Theorem~\ref{thm:linear-recurrence}]
\label{proof1}
$\Rightarrow$

Given a linear recurrence, we multiply it by $A_t=\left(\prod_{s=1}^t a_s \right)^{-1}$ and the following recurrence equation:
\begin{equation*}
\begin{aligned}
A_t {y_t}=A_t a_t {y_{t-1}}+A_t  {x_t}&=A_{t-1} {y_{t-1}}+A_t x_t.
\end{aligned}
\end{equation*}
Unroll it, we get:
\begin{equation}
\label{equ:unroll_linear}
{A_t}{y_t}-{A_{t-1}}{y_{t-1}}={A_t} { x_t}, 
\ldots, 
{A_2}{y_2}-{A_{1}}{y_{1}} ={A_2}{x_2}.
\end{equation}
To derive an expression for $y_t$, we sum the recursive equations and obtain:
\begin{equation}
{A_t}{y_t} - {A_{1}}{y_{1}} =\sum_{s=2}^{t} { A_s} c{x_s}, \\
{y_t}{A_t} =\sum_{s=1}^{t} { A_s}{x_s}, 
y_t = \sum_{s=1}^{t} \frac{ A_s}{ A_t} x_s  .
\end{equation}
By comparing the coefficients, we can obtain $c_{ts}={A_s} / {A_t}$.

$\Leftarrow$:

Given the linear combination $y_t=\sum_{s=1}^t c_{ts} x_s$ and $c_{ts}=\frac{g(s)}{g(t)}$, we define $a_t \triangleq \frac{g(t-1)}{g(t)}$. Then $y_t$ can be expressed as:
\begin{equation*}
\begin{aligned}
y_t & =\sum_{s=1}^t c_{ts} x_s =\sum_{s=1}^{t-1} c_{ts} x_s  + c_{tt} x_t = \sum_{s=1}^{t-1} \frac{g(s)}{g(t)} x_s  +\frac{g(t)}{g(t)} x_t \\
&=\frac{g(t-1)}{g(t)}\sum_{s=1}^{t-1}   \frac{g(s)}{g(t-1)} x_s  + x_t =a_t \sum_{s=1}^{t-1}   c_{t-1, s} x_s + x_t  = a_t y_{t-1} + x_t.\qedhere
\end{aligned}
\end{equation*}
\end{proof}

Based on the Theorem~\ref{thm:linear-recurrence}, for linear recurrence, we can directly discuss $g(t)$, as $a_t$ can be obtained through $\frac{g(t-1)}{g(t)}$. Intuitively, $g(t)$ can be interpreted as an importance score up to moment $t$, $c_{ts}=\frac{g(s)}{g(t)}$ can be interpreted as the ratio of the score at moment $s$ relative to moment $t$, and $a_t$ can be interpreted as the ratio of the previous moment's score to moment $t$'s score.

Typically, to prevent numerical overflow, we assume $0 \le a_t=\frac{g(t-1)}{g(t)} \le 1$. To meet this condition, we present the following two forms:
\newtheorem{lrtype}[linear-recurrence]{Proposition}
\label{pro:linear-recurrence}
\begin{lrtype} 
For Linear Recurrence with $0 \le a_t \le 1$, there are two forms:

1. Multiplicative: $g(t)=\frac{1}{\prod_{s=1}^t\rho(s)},a_t=\rho(t), 0\le  \rho(t)\le 1$;

2. Additive: $g(t)=\sum_{s=1}^t\delta(s),a_t=\frac{\sum_{s=1}^{t-1} \delta(s)}{\sum_{s=1}^{t} \delta(s)}, \delta(s)\ge 0$.
\end{lrtype}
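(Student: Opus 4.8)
The plan is to reduce everything to the characterization provided by Theorem~\ref{thm:linear-recurrence}: a linear recurrence is entirely determined by the sequence $g(\cdot)$ through $a_t = g(t-1)/g(t)$, and the constraint $0 \le a_t \le 1$ is precisely the statement that $g$ is positive and non-decreasing (non-negativity of the ratio forces $g(t-1)$ and $g(t)$ to have the same sign, and $g(t-1)/g(t) \le 1$ forces $g(t-1) \le g(t)$ when $g(t) > 0$). So I would prove the proposition by exhibiting the two parameterizations as two ways of generating such a $g$, and in each case compute $a_t$ directly and read off its range. Since the claim is a verification of two sufficient forms rather than an exhaustive classification, no converse direction is needed.

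For the \emph{multiplicative} form, I would substitute $g(t) = 1/\prod_{s=1}^t \rho(s)$ into $a_t = g(t-1)/g(t)$. The ratio telescopes: the partial product $\prod_{s=1}^{t-1}\rho(s)$ arising in the numerator cancels against $\prod_{s=1}^t \rho(s)$ in the denominator, leaving exactly $a_t = \rho(t)$. The hypothesis $0 \le \rho(t) \le 1$ then gives the range directly, and the non-decreasingness of $g$ follows since dividing by a factor $\rho(t) \le 1$ only increases $g$.

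For the \emph{additive} form, I would substitute $g(t) = \sum_{s=1}^t \delta(s)$ to obtain $a_t = \bigl(\sum_{s=1}^{t-1}\delta(s)\bigr)/\bigl(\sum_{s=1}^t\delta(s)\bigr)$, which is the claimed expression. Since every $\delta(s) \ge 0$, the numerator is a sub-sum of the denominator, so both are non-negative and the numerator never exceeds the denominator, giving $0 \le a_t \le 1$. I would also record here the conceptual payoff relevant to the rest of the paper: because $g(t)$ is a running sum of all importance scores $\delta(1),\dots,\delta(t)$, the decay at step $t$ encodes cumulative (global) information, which is the property later exploited for single-scan computation.

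The computations are routine telescoping and a comparison of sums; the only genuine subtlety is well-definedness. The ratio $a_t = g(t-1)/g(t)$ requires $g(t) \ne 0$: in the multiplicative form this fails if some $\rho(s) = 0$, and in the additive form it fails if $\delta(1) = \cdots = \delta(t) = 0$. I expect handling these boundary cases to be the main (and essentially only) obstacle: in both degenerate situations the intended recurrence still makes sense with $a_t = 0$ (a full reset $y_t = x_t$), so I would treat the $g$-representation as a bookkeeping device and either restrict to $\rho(s) > 0$ and $\sum_{s} \delta(s) > 0$, or interpret the limiting values by continuity, after which the stated range bounds hold verbatim.
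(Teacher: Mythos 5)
Your computations are correct, but you have proved the converse of what the paper proves, and your explicit decision that ``no converse direction is needed'' is exactly where you part ways with the authors' intent. The paper's proof runs in the direction condition~$\Rightarrow$~form: starting from an arbitrary $g$ with $0 \le g(t-1)/g(t) \le 1$, it defines $\rho(t) = g(t-1)/g(t)$ and $\delta(t) = g(t)-g(t-1)$, observes that the condition is equivalent to $\rho(t) \le 1$ (resp.\ $\delta(t) \ge 0$), and unrolls these definitions to recover $g(t) = 1/\prod_{s \le t}\rho(s)$ and $g(t) = \sum_{s\le t}\delta(s)$ after normalizing $g(0)$. That is a representation statement: \emph{every} admissible linear recurrence can be written in multiplicative form and in additive form. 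You prove form~$\Rightarrow$~condition: substituting each parameterization into $a_t = g(t-1)/g(t)$ and checking $0 \le a_t \le 1$. Both halves are one-line telescoping arguments, but the half the paper needs is the representation half, because the proposition is deployed as a classification (``typical linear attention with decay corresponds to the Multiplicative form,'' and the additive form is then offered as an equally expressive alternative that admits a single scan). Your direction alone does not justify calling these \emph{the} two forms, nor does it show that moving from a multiplicative to an additive parameterization loses no expressivity; you should add the paper's direction, which costs two lines. On the plus side, your handling of the degenerate cases (some $\rho(s)=0$, or an all-zero prefix of $\delta$, where $g(t-1)/g(t)$ is ill-defined) is more careful than the paper's, which silently assumes $g(t) \ne 0$ throughout and also elides that $\delta(t)\ge 0$ is equivalent to $g(t-1)/g(t)\le 1$ only when $g(t)>0$.
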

\begin{proof}[Proof of Proposition~\ref{pro:linear-recurrence}]
\label{proof2}
The condition $0\le \frac{g(t-1)}{g(t)} \le 1$, is equivalent to $\rho(t)=\frac{g(t-1)}{g(t)} \le 1$ or $\delta(t)=g(t)-g(t-1) \ge 0$.
By expanding the first formula, we obtain the multiplicative type. By expanding the second formula, we get the additive type.
\end{proof}

It can be observed that the typical linear attention with decay corresponds to the Multiplicative form, where $\rho(t)$ is utilized as $\text{Sigmoid}(\cdot)$~\citep{yang_GLA_2023}, $\text{exp}(-\text{exp}(\cdot))$~\citep{gu2023mamba}, or a fixed value~\citep{2307.14995, retnet}. Multiplicative requires multiple scans when dealing with high dimensions because $a_t$ itself cannot provide global information (as $a_t$ is only related to the moment $t$). However, for the Additive method, since the computation form is $a_t=\frac{\sum_{s=1}^{t-1} \delta(s)}{\sum_{s=1}^{t} \delta(s)}$, by modifying the denominator to $\Delta=\sum_{s=1}^n \delta(s)$ ($n$ is the sequence length), global information can be obtained through $a_t=\frac{\sum_{s=1}^{t-1} \delta(s)}{\Delta}$.

\section{LightNet}

Building upon the preceding analysis, we introduce a novel Linear Transformer architecture termed LightNet, designed to handle multi-dimensional data efficiently in a single scan. 
An overview of its structure is depicted in Fig.~\ref{arch}.
LightNet comprises an Input Embedding, MD-TPE module, and several stacked LightNet Layers.

\begin{figure}[t]
  \centering
  \setlength{\abovecaptionskip}{0.cm}
  \vspace{-2mm}
    \includegraphics[width=0.85\textwidth]{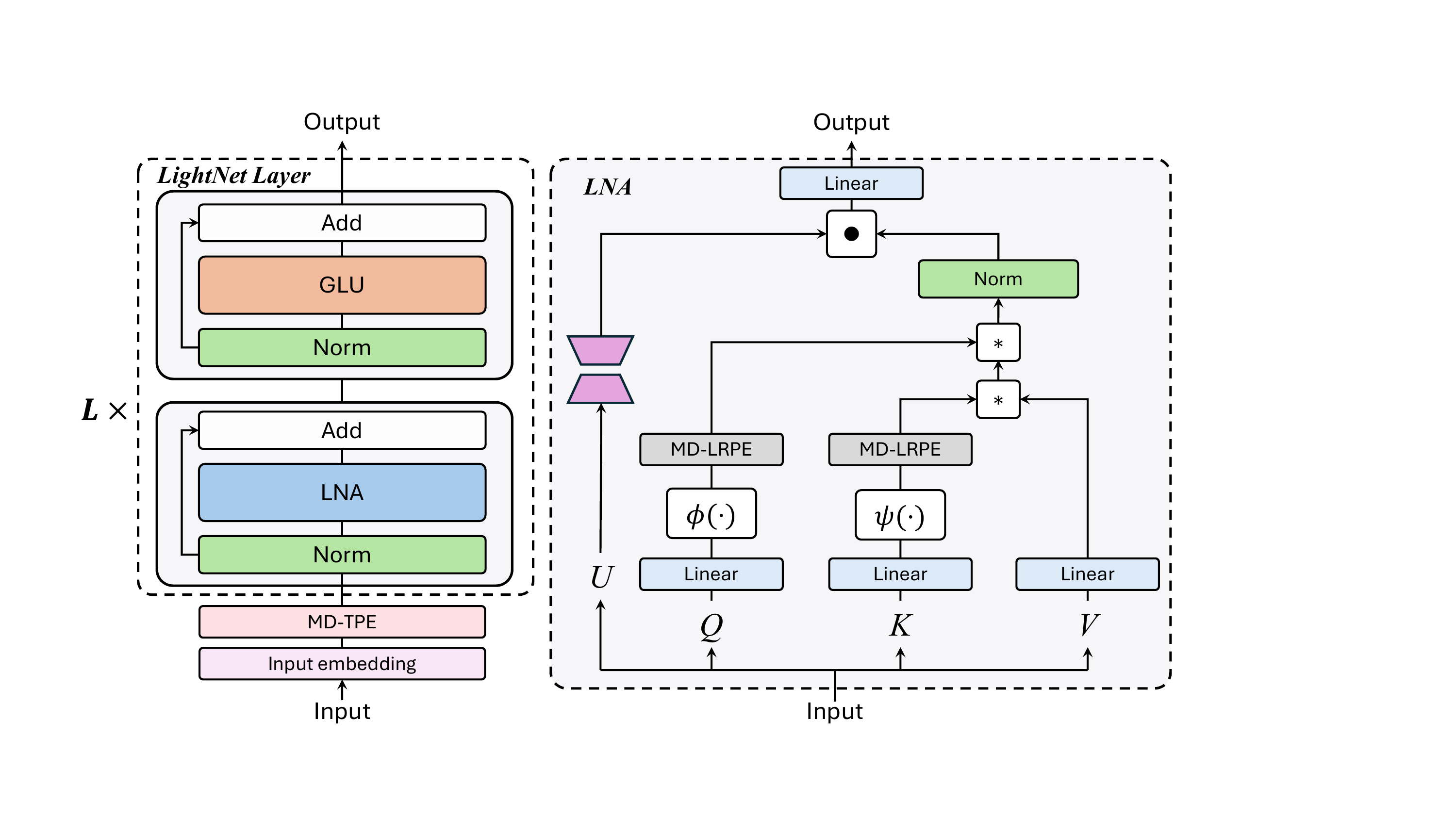}
    \caption{\textbf{The network structure of LightNet}: each LightNet model is comprised of an Input Embedding, MD-TPE, and a stack of multiple LightNet Layers. Each LightNet Layer consists of an LNA and a GLU, with the computation of LNA illustrated in the figure on the right.}
    \vspace{-4mm}
     \label{arch}
\end{figure}

\subsection{LightNet Layer}
The LightNet Layer is composed of a LightNet Attention (LNA) and a Gated Linear Unit (GLU)~\citep{2002.05202}. Within the LNA, an additive decay is employed, with the parameter $\delta$ implemented through the exponential function. Additionally, a parameter tiling strategy is utilized for both the key and decay, which has been empirically observed to enhance performance. This empirical evidence is detailed in Table~\ref{table:wiki103}. Furthermore, the integration of a low-rank output gate from TNL3~\citep{2307.14995} and a normalization after linear attention~\citep{qin_transnormer_emnlp_2022} has been incorporated.

In causal settings, the LightNet Layer can be represented as follows:
\begin{equation}
\begin{aligned}
\mathbf s_t = \mathbf s_{t-1} + \exp(\mathbf k_t ),
\mathbf {\bar k_t}&= \exp(\mathbf k_t ) / \mathbf s_t,  
\mathbf {kv}_t = \mathrm{diag}\{1-  \mathbf {\bar k_t}\} \mathbf {kv}_{t-1}  +  \mathbf {\bar k_t}\mathbf v_t^\top , \\
\mathbf o_t^\top &=  \mathrm{Norm}[\mathbf {kv}_t^\top \phi(\mathbf q_t)] \odot \psi(u_t) .
\end{aligned}
\end{equation}

In non-causal settings, the expression becomes: 
\begin{equation}
\label{eq:non-causal}
\begin{gathered}
\mathbf s=\sum_t \exp(\mathbf k_t ), 
\mathbf o_t =  \mathrm{Norm}\left[ \phi(q_t) \sum_t (\exp(k_t)/s)^\top \mathbf v_t \right] \odot \psi(u_t),  \\
\mathbf O = \mathrm{Norm} \left [ \phi(\mathbf Q)(\mathrm{Softmax}(\mathbf K)^\top \mathbf V) \right] \odot \psi(\mathbf U).
\end{gathered}
\end{equation}
where $\mathbf X$ is the input of LNA:
\begin{equation}
\mathbf{Q}=\mathbf{X} \mathbf{W}_q, \mathbf{K}=\mathbf{X} \mathbf{W}_k, \mathbf{V}=\mathbf{X} \mathbf{W}_v, \mathbf{U}=\mathbf{X} \mathbf{W}_{u1} \mathbf{W}_{u2},
\phi =\mathrm{Swish}, \psi=\mathrm{Sigmoid}.
\end{equation}

\subsection{Multi Dimension Position Encoding}
\label{sec:mdpe}
It is noted that additive recurrence does not have a locality prior like multiplicative recurrence and is permutation invariant in non-causal scenarios, as shown in E.q~\ref{eq:non-causal}. Therefore, it is necessary to introduce new positional encoding.
To tackle this challenge, we introduce two novel relative positional encoding methods, MD-TPE (Multi-Dimensional Toeplitz Positional Encoding) and expand the existing LRPE~\citep{qin2023linearized} to the high-dimensional context as MD-LRPE (Multi-Dimensional Linearized Relative Positional Encoding). This enhancement enables the management of relative positional relationships in any dimension.

\paragraph{MD-TPE.}
Given multi-dimension input $\mathbf x_{n_1,\ldots, n_k}, 1\le n_s \le N_s, s=1,\ldots k$, we use the following equation to capture positional information:
\begin{equation}
\begin{aligned}
\mathbf y_{n_1,\ldots, n_k}
=\sum_{m_k\le n_k} \ldots \sum_{m_1\le n_1}  \mathbf y_{n_1-m_1,\ldots, n_k-m_k}
\mathbf x_{m_1, \ldots, m_k}.
\end{aligned}
\end{equation}
 However, the time complexity of implementing the aforementioned method is $O(N\log N)$,where $ N=\prod_{s=1}^k n_s$, making it inefficient.
To address this, we simplified the above formula by performing toeplitz matrix production for each dimension separately and using SSM for parameterization~\citep{qin2023accelerating}, we denotes $e$ as the hidden dimension of SSM below:
\begin{equation}
\begin{aligned}
\mathbf y_{n_1,\ldots, n_k}
=\sum_{s=1}^k \sum_{m_s=1}^{n_s} \mathbf y_{n_s-m_s}
\mathbf x_{n_1,\ldots,m_s, \ldots, n_k}
=\sum_{s=1}^k  \sum_{m_s=1}^{n_s}\sum_{t=1}^e\lambda_{t}^{n_s-m_s} 
\mathbf x_{n_1,\ldots,m_s, \ldots, n_k}.
\end{aligned}
\end{equation}
By using a scan approach, the above calculation becomes linear in complexity, $O(Ne)$.

\paragraph{MD-LRPE.}
Given $\mathbf x_t \in \mathbb R^{d}, \mathbf x\in \{\mathbf q, \mathbf k\}$, LRPE transforms it through the matrix $\mathbf W_t$ to $\mathbf W_t \mathbf x_t, \mathbf x\in \{\mathbf q, \mathbf k\}$, and it holds that:
\begin{equation}
\begin{aligned}
(\mathbf W_s \mathbf q_s)^{\mathrm{H}} (\mathbf W_t \mathbf k_t)
=\mathbf q_s^{\mathrm{H}}\mathbf W_s ^{\mathrm{H}}\mathbf W_t \mathbf k_t
=\mathbf q_s^{\mathrm{H}}\mathbf W_{t-s} \mathbf k_t.
\end{aligned}
\end{equation}
We choose the complex version of LRPE, where:
\begin{equation}
\begin{aligned}
\mathbf W_t =\mathrm{diag}\{\exp(it\theta_1),\ldots, \exp(it\theta_d) \}.
\end{aligned}
\end{equation}
To generalize to higher dimensions, \ie, given $\mathbf x_{n_1,\ldots, n_k} \in \mathbb R^{d}, \mathbf x\in \{\mathbf q, \mathbf k\}$, we divide the $d$ features into $k$ groups, each group has $d/k$ features, with the $s$-th group's features corresponding to dimension $n_s$, $s\in [1, k]$. Specifically, we define:
\begin{equation}
\begin{aligned}
\mathbf W_{n_1,\ldots,n_k} =\mathrm{diag}\{[\Theta_1,\ldots, \Theta_k] \},
\Theta_s = \exp(i n_k \theta_{j}), sd/k< j \le  (s+1)d/k, 
\theta_j = 10000^{-2j / d}.
\end{aligned}
\end{equation}

Thus:
\begin{equation}
\begin{aligned}
\mathbf W_{n_1,\ldots,n_k}^{\mathbf H}\mathbf W_{m_1,\ldots,m_k}
=\mathbf W_{m_1-n_1,\ldots,m_k-n_k}
\end{aligned}
\end{equation}

Then:
\begin{equation}
\begin{gathered}
(\mathbf W_{n_1,\ldots, n_k} \mathbf q_{n_1,\ldots, n_k} )^{\mathrm{H}} (\mathbf W_{m_1,\ldots, m_k}  \mathbf k_{m_1,\ldots, m_k})
=\mathbf q_{n_1,\ldots, n_k}^{\mathrm{H}}\mathbf W_s ^{\mathrm{H}}\mathbf W_t \mathbf k_{m_1,\ldots, m_k} \\
=\mathbf q_{n_1,\ldots, n_k}^{\mathrm{H}}\mathbf W_{m_1-n_1,\ldots,m_k-n_k} \mathbf k_{m_1,\ldots, m_k}.
\end{gathered}
\end{equation}

\section{Experiments}
We comprehensively evaluate the substitutability of our LightNet in performance, scalability, flexibility, and efficiency.
We validate the effectiveness of our model on various multi-dimensional sequential modeling tasks.
We also test the proposed ability of LightNet to serve as a language model.

\begin{table}[ht]
\centering
\small
\caption{\textbf{Performance comparison for image classification task on ImageNet1k.} 
\enquote{S.A.} represents Softmax Attention, \enquote{M.S.} denotes multiple scans, and \enquote{O.S.} signifies one scan.}
\label{table:imagenet1k}
\begin{threeparttable}
\setlength{\tabcolsep}{0.1cm}
\begin{tabular}{lccccccc}
\toprule
\multirow{2}{*}{Model} & \multirow{2}{*}{Category} & \multicolumn{2}{c}{\textbf{Tiny}} & \multicolumn{2}{c}{\textbf{Small}} & \multicolumn{2}{c}{\textbf{Base}} \\
\cmidrule{3-8}
               &     & Acc (\%) $\uparrow$ & Params (M) & Acc (\%) $\uparrow$ & Params (M) & Acc (\%) $\uparrow$ & Params (M) \\
\midrule
DeiT~\cite{touvron_deit_icml_2021}             & S.A.   & 72.20 & 5.7   & 79.90 & 22.00   & 81.80 & 86.00 \\
HGRN~\cite{qin_hgrn_nips_2024}            & M.S.   & 74.40 & 6.1   & 80.09 & 23.70   & -    & - \\
Vim~\cite{zhu_visionmamba_2024}              & M.S.   & 76.10 & 7.0   & 80.50 & 26.00   & -    & - \\
V-RWKV~\cite{duan_visionRWKV_2024}            & M.S.   & 75.10 & 6.2   & 80.10 & 23.80   & 82.00 & 93.70 \\
\midrule
LightNet         & O.S.   & 74.46 & 6.0   & 80.12 & 22.64  & 81.90 & 87.74 \\
LightNet w/o  TPE & O.S. & 73.97 & 6.0 & 79.65 & 22.54 & 81.45 & 87.54 \\
LightNet w/o LRPE & O.S. & 74.02 & 6.0& 79.54 & 22.63 & 81.72 & 87.69 \\
\bottomrule
\end{tabular}
\end{threeparttable}
\end{table}

\subsection{Setting}

\begin{wraptable}[16]{r}{0.5\textwidth}
    \vspace{-8pt}
    \small
    \setlength{\tabcolsep}{0.01cm}
    \caption{\textbf{Performance comparison for image generation task on ImageNet-1k.} 
    LightNet-XL/2 achieves state-of-the-art FID with or without classifier-free guidance (-G)~\cite{ho_cfg_2022}.}
    \begin{threeparttable}
    \begin{tabular}{lccccc}
    \toprule
    Model & FID$\downarrow$   & sFID$\downarrow$  & IS$\uparrow$     & Precision$\uparrow$ & Recall$\uparrow$ \\
    \midrule
    CDM~\cite{ho_CDM_JMLR_2022} & 4.88 & - & 158.71 & - & - \\
    \midrule
    LDM-8~\cite{rombach_ldm_cvpr_2022} & 15.51 & - & 79.03 & 0.65 & 0.63 \\
    LDM-8-G & 7.76 & - & 209.52 & 0.84 & 0.35 \\
    LDM-4 & 10.56 & - & 103.49 & 0.71 & 0.62 \\
    LDM-4-G & 3.60 & - & 247.67 & 0.87 & 0.48 \\
    \midrule
    DiT-XL/2~\cite{peebles_dit_iccv_2023} & 9.62 & 6.85 & 121.50 & 0.67 & {0.67} \\
    DiT-XL/2-G & 2.27 & 4.60 & {278.24} & 0.83 & 0.57 \\
    \midrule
    LightNet-XL/2               & 5.35 & 5.93 & 171.18 & 0.73 & 0.65 \\
    LightNet-XL/2-G             & 2.18 & 4.58 & 281.85 & 0.83 & 0.58 \\
    \bottomrule
    \end{tabular}
    \end{threeparttable}
    \vspace{-20pt}
    \label{table:image_generation}
\end{wraptable}
\vspace{-3mm}
\paragraph{Image Classification.}
We trained our LightNet model for image classification on the ImageNet-1K dataset~\cite{deng2009imagenet}. Our approach modifies the network architecture and training protocols of DeiT~\cite{touvron_deit_icml_2021}, substituting its Transformer Layers with our proprietary LightNet Layers.

\vspace{-3mm}
\paragraph{Image Generation.}
We build our model upon the latent diffusion model~\cite{rombach_ldm_cvpr_2022,peebles_dit_iccv_2023} and use our proposed LightNet as the denoising network.
We adjust the model size across various configurations (S, B, L, XL) and patch sizes (8, 4, 2), consistent with DiT~\cite{peebles_dit_iccv_2023}.
Experiments are conducted on the ImageNet dataset~\cite{deng2009imagenet} at a resolution of $256 \times 256$. 
Each model is trained over 0.4M steps with a batch size of 256 to assess scaling capabilities. 
For the largest model variant, training is extended to 0.8M steps with a batch size of 1024, as opposed to the 7M steps in DiT, to enhance generative performance.

\begin{wraptable}[24]{r}{0.5\textwidth}
    \vspace{-20pt}
    \centering
    \small
    \caption{\textbf{Performance comparison on Wikitext-103}. $\downarrow$ means \textit{lower is better}. We adopted the configuration of HGRN for Wikitext-103, and we can observe that LightNet significantly outperforms all other methods.}
    \begin{threeparttable}
    \setlength{\tabcolsep}{0.3cm}
    \begin{tabular}{llll}
    \toprule
        Model & \makecell[c]{PPL \\(val) $\downarrow$} & \makecell[c]{PPL \\(test) $\downarrow$} & \makecell[c]{Params\\(M)} \\ 
    \midrule
        \textit{Attn-based}  \\ 
        Transformer & 24.40 & 24.78 & 44.65 \\ 
        FLASH & 25.92 & 26.70 & 42.17 \\ 
        1+elu & 27.44 & 28.05 & 44.65 \\ 
        Performer & 62.50 & 63.16 & 44.65 \\
        cosFormer & 26.53 & 27.06 & 44.65 \\ 
    \midrule
        \textit{RNN-based}  \\ 
        S4 & 38.34 & 39.66 & 45.69 \\ 
        DSS & 39.39 & 41.07 & 45.73 \\ 
        GSS & 29.61 & 30.74 & 43.84 \\ 
        RWKV-4 & 24.31 & 25.07 & 46.23\\
        LRU & 29.86 & 31.12 & 46.24\\
        {HGRN} & {24.14} & 24.82 &46.25 \\ 
    \midrule
        \textit{FFT-based}  \\ 
        TNN & {23.98} & {24.67} & 48.68 \\  \midrule
        LightNet & {23.09} & 23.75 &45.07 \\
    \bottomrule
    \end{tabular}
    \end{threeparttable}
    \label{table:wiki103}
    \vspace{-40pt}
\end{wraptable}

\vspace{-3mm}
\paragraph{Bidirectional Language Modeling.}
We utilize Cramming-BERT~\citep{2212.14034} as our pipeline, employing a 24-hour training regime to pre-train on the Pile dataset, subsequently finetuning on the GLUE benchmark~\citep{1804.07461}. 
During pre-training, we follow established guidelines by setting a learning rate of 1e-3, a sequence length of 128, and a batch size of 8192. 
In the finetuning phase, we experiment with learning rates from the set \{5e-5, 4e-5, 3e-5, 2e-5\} and determine the optimal outcome by finetuning over 5 epochs.
\vspace{-3mm}
\paragraph{Autoregressive Language Modeling.}
We evaluate two capabilities: perplexity (PPL) and zero-shot reasoning ability. 
The perplexity of the 44M model is assessed on the Wikitext-103 dataset \cite{merity_Wikitext103_iclr_2016}, and the 380M model's perplexity is tested on the Pile dataset, consuming 10 billion tokens . 
For large language model experiments, we train LightNet models at scales of 150M, 350M, 1B, and 3B using 100 billion tokens sampled from subsets of the Pile~\citep{pile}. 
These models are then evaluated on commonsense reasoning tasks using the lm-eval-harness~\citep{eval-harness}. Detailed training hyperparameters are listed in Table~\ref{table:config}.

\vspace{-2mm}
\subsection{Results}
\vspace{-2mm}

\paragraph{Image Classification.}
As shown in Table~\ref{table:imagenet1k}, the proposed LightNet shows competitive performance on the ImageNet-1k dataset. 
It can be observed that using only a single sequential scan, LightNet can achieve comparable performance to models with naive attention and multiple sequential scans.
At the same time, the speed advantage of using a single sequential scan is shown in Fig.~\ref{1vs2 scan}.

\vspace{-4mm}
\paragraph{Image Generation.}
The image generation results are presented in Table~\ref{table:image_generation}. 
Our proposed LightNet demonstrates superior performance, achieving a lower Fréchet Inception Distance (FID) and a higher Inception Score (IS) than DiT~\cite{peebles_dit_iccv_2023} with fewer training steps (0.8M steps vs 7M steps). 
Additionally, LightNet exhibits commendable scaling capabilities, as illustrated in Fig.~\ref{diffusion_scaling}.

\vspace{-4mm}
\begin{wraptable}[14]{r}{0.45\textwidth}
\vspace{-18pt}
\label{table:lm_large}
    \centering
    \small
    \caption{\textbf{Performance comparison Pile for large-scale language modeling.}. We trained under the 10 billion token subset of Pile, and it can be seen that LightNet's PPL is better than LLaMA's.}
    \begin{threeparttable}
    \setlength{\tabcolsep}{0.35cm}
    \begin{tabular}{lll}
    \toprule
        Model & PPL $\downarrow$ & Params \\
    \midrule
        LLaMA & 4.62 & 385M \\
          TNL & 4.62 & 379M \\
          Mamba & 4.59 & 385M \\
        LightNet & 4.59 & 379M \\
    \midrule
        LightNet w/o TPE & 4.69 & 379M \\
        LightNet w/o LRPE & 4.69 & 379M \\
        LightNet no share & 4.76 & 385M \\
    \bottomrule
    \end{tabular}
    \end{threeparttable}
    \label{tab:exp_llm}
\vspace{-20pt}
\end{wraptable}
\paragraph{Bidirectional Language Modeling.}
As shown in Table~\ref{table:glue}, LightNet outperforms Crammed Bert~\citep{2212.14034} on the GLUE dataset, demonstrating its superior capability in handling natural language understanding tasks. Despite BERT-Base~\citep{devlin-etal-2019-bert} achieving comparable performance, it is noteworthy that LightNet does so with a significantly lower computational cost, having been trained on a single A100 for 24 hours.

\vspace{-4mm}
\paragraph{Autoregressive Language Modeling.}
In the Wikitext-103 dataset, as depicted in Table~\ref{table:wiki103}, LightNet surpasses all competitors on both the validation and test datasets. Regarding large-scale datasets, as illustrated in Table~\ref{tab:exp_llm}, LightNet exhibits superior perplexity (PPL) compared to LLaMA~\citep{llama} and TNL~\citep{2307.14995}, and matches the performance of Mamba~\citep{gu2023mamba}. The ability of LightNet to achieve high performance with reduced parameter complexity underscores its potential for scalability and broader application across various large-scale data scenarios.

\begin{figure}[t]
  \centering
  \setlength{\abovecaptionskip}{0.cm}
  \vspace{-3mm}
    \includegraphics[width=0.45\textwidth]{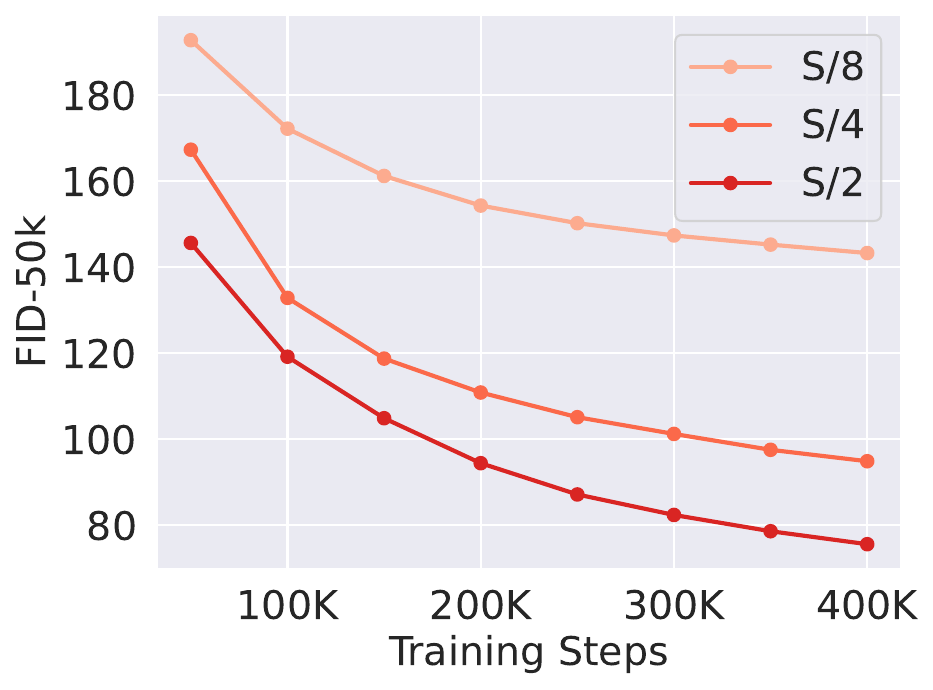}
    \includegraphics[width=0.45\textwidth]{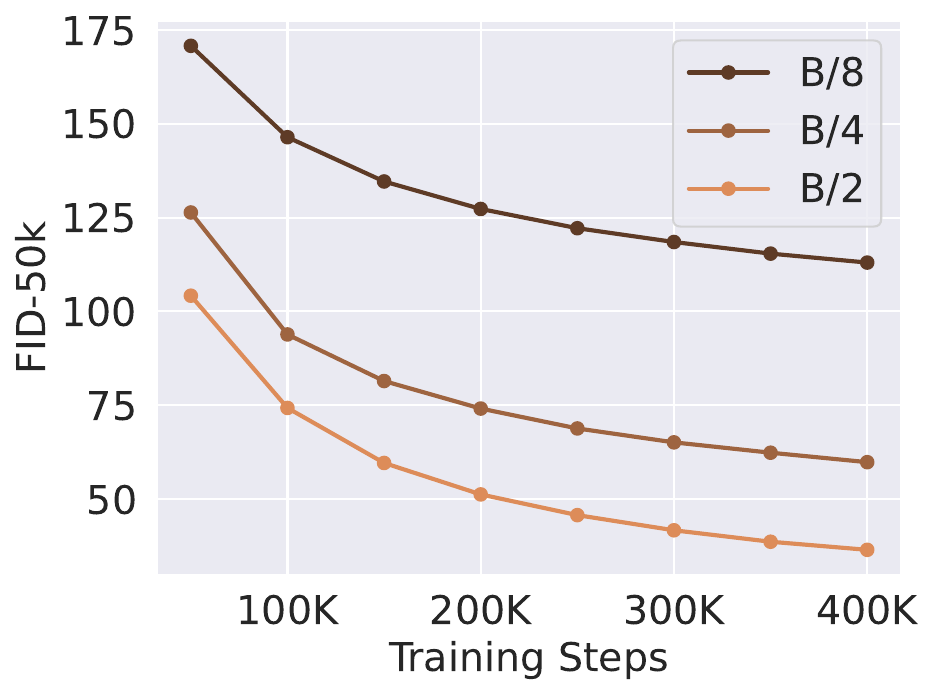}
    \includegraphics[width=0.45\textwidth]{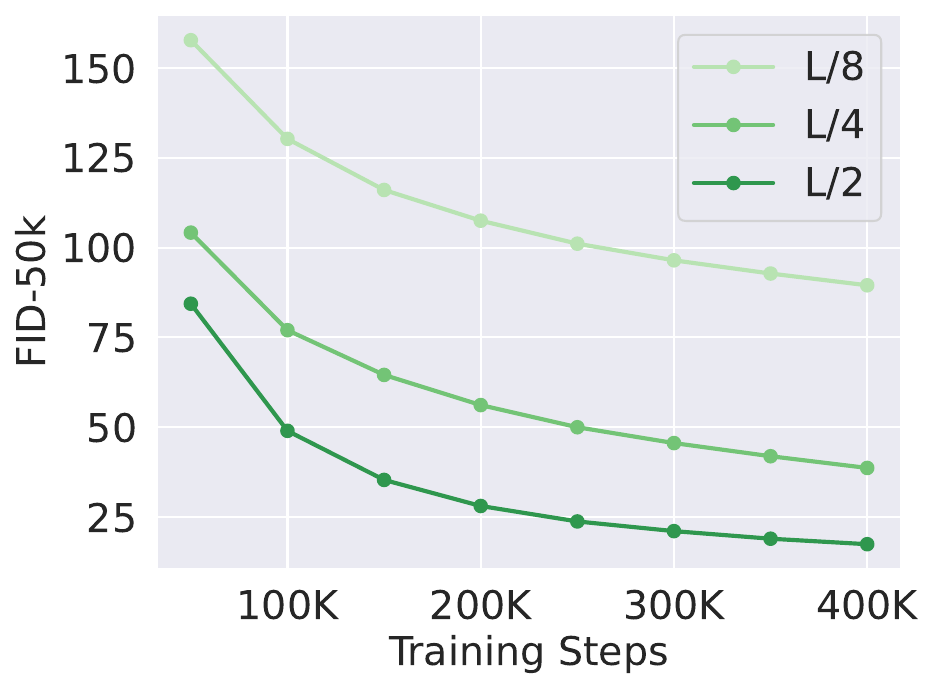}
    \includegraphics[width=0.45\textwidth]{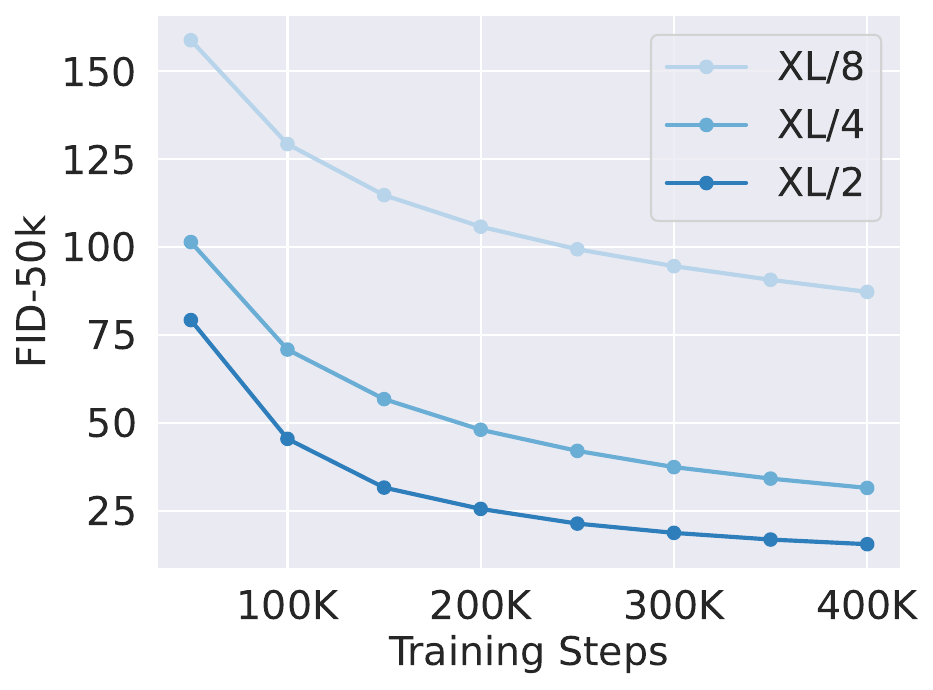}
    \caption{\textbf{Scaling up the LightNet enhances the FID during every stages of training.} We present the FID-50K across training iterations for twelve LightNet models. Enhancing the LightNet backbone results in improved generative models for all sizes of models and patches.}
    \vspace{-4mm}
    \label{diffusion_scaling}
\end{figure}

\begin{table}[t]
\centering
\caption{\textbf{Performance Scores on GLUE Benchmark.} We utilize the Cramming-BERT 24-hour training configuration and observe that LightNet outperforms Crammed BERT and achieves comparable results to BERT-Base, which is trained with more GPU hours.}
\label{table:glue}
\begin{threeparttable}
\setlength{\tabcolsep}{0.14cm} 
\small 
\begin{tabular}{lccccccccc}
\toprule
\textbf{Model} & \textbf{MNLI} & \textbf{SST-2} & \textbf{STSB} & \textbf{RTE} & \textbf{QNLI} & \textbf{QQP} & \textbf{MRPC} & \textbf{CoLA} & \textbf{GLUE} \\
\midrule
BERT-Base (Fully trained) & 83.2 / 83.4 & 91.9 & 86.7 & 59.2 & 90.6 & 87.7 & 89.3 & 56.5 & 80.9 \\
BERT-Base (No Pretrain)   & 34.1 / 34.1 & 79.9 & 17.8 & 47.3 & 50.0 & 68.6 & 77.9 & - & 45.5 \\
\midrule
Crammed BERT      & 83.9 / 84.1 & 92.2 & 84.6 & 53.8 & 89.5 & 87.3 & 87.5 & 44.5 & 78.6 \\
\midrule
LightNet                  & 83.3 / 83.5 & 92.9 & 86.3 & 55.6 & 89.1 & 87.7 & 88.5 & 52.6
& 79.9\\
LightNet w/o TPE   & 82.1 / 82.9 & 	92.4&  79.4&  57.8&  89.2& 87.7& 
 83.8&  44.1&  77.7 \\
\bottomrule
\end{tabular}
\end{threeparttable}
\end{table}

\subsection{Ablation Studies}

\begin{table}[t]
    \centering
    \small
    \begin{threeparttable}
    \setlength{\tabcolsep}{0.28cm}
    \caption{\textbf{Ablation studies on image generation} for LightNet-B/2 Configurations. 
    We compare the performance of FID under different training steps.}
    \label{tab:lightnet-b2-performance}
    \begin{tabular}{lcccccccc}
    \toprule
        Model & 50K & 100K & 150K & 200K & 250K & 300K & 350K & 400K \\
    \midrule
        LightNet-B/2 & 104.19 & 74.27 & 59.60 & 51.22 & 45.70 & 41.65 & 38.60 & 36.45 \\
        LightNet-B/2 w/o TPE & 105.86 & 77.64 & 64.81 & 57.24 & 51.98 & 48.12 & 44.90 & 42.74 \\
       LightNet-B/2 w/o LRPE & 132.17 & 82.99 & 67.79 & 59.02 & 52.88 & 48.41 & 44.94 & 42.37 \\
    \bottomrule
    \end{tabular}
    \end{threeparttable}
    \vspace{-4mm}
\end{table}

\noindent
\textbf{Effectiveness of Parameters Sharing.}
As discussed in Sec.~\ref{sec:mdpe}, we employ a parameter sharing strategy between decay and key, and the performance comparison is presented in Table \ref{tab:exp_llm}. 
The results demonstrate that employing independent parameters for decay and key leads to performance deterioration, highlighting the significance of parameter sharing.

\textbf{Effectiveness of MD-TPE.}
The proposed MD-TPE provides relative positional information under linear complexity.
We thus explore the effectiveness of the MD-TPE across all tasks, shown in Table~\ref{table:imagenet1k},\ref{tab:exp_llm},\ref{table:glue},\ref{tab:lightnet-b2-performance}.
We can observe that removing MD-TPE results in significant performance degradation, particularly for image generation, which highly depends on the relative position of the image content. 
Similarly, performance comparison in language modeling tasks also confirms the effectiveness of MD-TPE when reduced to a single dimension.

\textbf{Effectiveness of MD-LRPE.}
LRPE has already proven its effectiveness in the field of language modeling. Therefore, when faced with higher-dimensional inputs, the contributions of its extension, MD-LRPE should be systematically validated. To this end, we conduct numerous ablation experiments, and the results, as shown in Table~\ref{table:imagenet1k},\ref{tab:exp_llm},\ref{table:glue},\ref{tab:lightnet-b2-performance}, demonstrate the effectiveness of extending LRPE into a multi-dimensional space through MD-LRPE operation.

\textbf{Speed Test.} 
The current linear complexity models employ multiplicative linear recurrence in sequence modeling and necessitate at least two scans for multi-dimensional data, resulting in processing time denoted by the \enquote{2 Scan} in Fig.~\ref{1vs2 scan}. In contrast, our LightNet requires only a single scan, leading to a processing time denoted by the \enquote{1 Scan}. As evident from the figure, the advantage of the \enquote{1 Scan} becomes increasingly pronounced with the growth of sequence length.

\section{Conclusion}
\vspace{-2mm}

In this paper, we have addressed the inefficiency of "multiplicative" linear recurrence in multi-dimensional sequence modeling by introducing a novel "additive" linear recurrence that handles multi-dimensional data within a single scan. We developed LightNet, a new multi-dimensional linear attention model enhanced by two new multi-dimensional linear relative positional encoding methods, MD-TPE and MD-LRPE. Empirical evaluations across tasks like image classification, image generation, bidirectional language modeling, and autoregressive language modeling demonstrate LightNet's superior performance and versatility. LightNet offers a significant advancement in efficiency and scalability, providing a promising pathway for future research and applications in multi-dimensional sequence modeling.

\textbf{Limitations.}
Our empirical evaluation of LightNet is conducted on a smaller scale compared to other large-scale models. Potential negative social consequences include the misuse of brain models for inappropriate purposes or applications, necessitating prohibition through appropriate regulations.

\section{Acknowledgments}
This research was supported in part by the National Natural Science Foundation of China (62271410), the National Key R\&D Program of China (NO.2022ZD0160100), and the Fundamental Research Funds for the Central Universities. Yuxin Mao is sponsored by the Innovation Foundation for Doctor Dissertation of Northwestern Polytechnical University (CX2024014).

{\small
\bibliographystyle{unsrt}
\bibliography{lightnet}
}

\appendix
\newpage
\section{Appendix}
\subsection{Proof of Eq~\ref{equ:unroll_lr}}
\label{proof:linear}
Note that
\begin{equation*}
\begin{gathered}
A_t {y_t}=A_t a_t {y_{t-1}}+A_t  {x_t}=A_{t-1} {y_{t-1}}+A_t x_t,  \\
{A_t}{y_t}-{A_{t-1}}{y_{t-1}}={A_t} { x_t}, \\
\ldots, \\
{A_2}{y_2}-{A_{1}}{y_{1}} ={A_2}{x_2} .
\end{gathered}
\end{equation*}
By summing up, we can obtain:
\begin{equation*}
{A_t}{y_t} - {A_{1}}{y_{1}} =\sum_{s=2}^{t} { A_s} c{x_s}, \\
{y_t}{A_t} =\sum_{s=1}^{t} { A_s}{x_s},\\
y_t = \sum_{s=1}^{t} \frac{ A_s}{ A_t} x_s  .
\end{equation*}

\subsection{More experiments}
In this section, we provide additional experimental results. In Table~\ref{exp:llm}, we show the performance of LightNet under the Commonsense Reasoning Tasks. In Table~\ref{tab:lightnet-scores}, we present the effects of LightNet on image generation tasks across various sizes.
\begin{table}[htb!]
\small
\caption{\textbf{Performance Comparison on Commonsense Reasoning Tasks.} PS, T, HS, WG stand for parameter size (billion), tokens (billion), HellaSwag, and WinoGrande, respectively. }
    \centering
    \label{exp:llm}
    \begin{tabular}{l|c|c|l|l|l|l|l|l|l}
    \toprule
        Model & P & T & PIQA & HS & WG & ARC-e & ARC-c & OBQA & AVG \\ \midrule
        GPT-Neo & 0.13 & 300 & 63.06  & 30.40  & 50.43  & 43.73  & 23.12  & 26.20  & 39.49  \\ 
        OPT & 0.16 & 300 & 62.95  & 31.35  & 50.43  & 43.52  & 22.70  & 28.00  & 39.83  \\ 
        Pythia & 0.16 & 300 & 61.32  & 30.16  & 51.93  & 43.18  & 23.12  & 26.80  & 39.42  \\ 
        RWKV-4 & 0.17 & - & 65.07  & 32.26  & 50.83  & 47.47  & 24.15  & 29.60  & 41.56  \\ 
        HGRN & 0.15 & 100 & 65.02  & 33.33  & 50.20  & 46.68  & 23.81  & 28.60  & 41.27  \\ 
        
        LightNet & 0.15 & 100 & 63.49& 30.55& 50.83	& 46.04&  24.40& 27.40&  40.45 \\
        \midrule
        OPT & 0.35 & 300 & 64.58  & 36.69  & 52.49  & 44.02  & 23.89  & 28.20  & 41.65  \\ 
        Pythia & 0.40 & 300 & 67.08  & 40.52  & 53.59  & 51.81  & 24.15  & 29.40  & 44.43  \\ 
        BLOOM & 0.56 & 350 & 64.09  & 36.97  & 52.80  & 47.35  & 23.98  & 28.20  & 42.23  \\ 
        RWKV-4 & 0.43 & - & 67.52  & 40.90  & 51.14  & 52.86  & 25.17  & 32.40  & 45.00  \\ 
        HGRN & 0.35 & 100 & 66.70  & 38.12  & 51.70  & 49.20  & 25.26  & 30.60  & 43.60  \\ 
        LightNet & 0.39 & 100 & 66.87&  38.82&  50.75& 51.39& 25.17	& 28.20&  43.53 \\
        \midrule
        GPT-Neo & 1.3 & 300 & 71.11  & 48.93  & 54.93  & 56.19  & 25.85  & 33.60  & 48.44  \\ 
        OPT & 1.3 & 300 & 71.71  & 53.70  & 59.35  & 57.24  & 29.69  & 33.20  & 50.82  \\ 
        Pythia & 1.4 & 300 & 70.67  & 47.18  & 53.51  & 56.99  & 26.88  & 31.40  & 47.77  \\ 
        BLOOM & 1.1 & 350 & 67.14  & 42.98  & 54.93  & 51.47  & 25.68  & 29.40  & 45.27  \\
        RWKV-4 & 1.5 & - & 72.36  & 52.48  & 54.62  & 60.48  & 29.44  & 34.00  & 50.56  \\ 
        HGRN & 1.0 & 100 & 70.89  & 48.02  & 51.62  & 55.64  & 27.90  & 31.60  & 47.61  \\ 
        LightNet & 1.0 & 100 & 69.75& 44.68& 51.85& 55.35&  27.65& 32.80& 47.01  \\
        
        \midrule
        OPT & 2.7 & 300 & 73.83 & 60.60 & 61.01 & 60.77 & 31.31 & 35.2.0 & 53.79  \\ 
       Pythia & 2.8 & 300 & 74.10 & 59.31 & 59.91 & 64.14 & 33.02 & 35.60 & 54.35  \\ 
        BLOOM & 3.0 &  350& 70.57 & 54.53 & 58.48 & 59.43 & 30.38 & 32.20 & 50.93  \\ 
       RWKV-4 & 3.0 & - & 72.42 & 58.75 & 57.30 & 62.92 & 35.15 & 36.20 & 53.79  \\
         LightNet  & 2.9 & 100 & 73.56& 55.47& 57.77& 61.57& 32.94& 33.60& 52.49 \\
    \bottomrule
    \end{tabular}
    \label{table:lm-eval}
\end{table}

\begin{table}[!ht]
    \centering
    \small
    \begin{threeparttable}
    \caption{Performance Metrics Across Different LightNet Configurations}
    \label{tab:lightnet-scores}
    \setlength{\tabcolsep}{0.1cm} 
    \begin{tabular}{lccccccccc}
    \toprule
        Model & 50K & 100K & 150K & 200K & 250K & 300K & 350K & 400K \\
    \midrule
        LightNet-S/8 & 192.79 & 172.23 & 161.23 & 154.34 & 150.25 & 147.40 & 145.27 & 143.31 \\
        LightNet-S/4 & 167.33 & 132.89 & 118.77 & 110.88 & 105.15 & 101.25 & 97.56 & 94.90 \\
        LightNet-S/2 & 145.66 & 119.20 & 104.90 & 94.45 & 87.18 & 82.41 & 78.63 & 75.61 \\
        DiT-S/2 & - & - & - & - & - & - & - & 67.16 \\
    \midrule
        LightNet-B/8 & 170.79 & 146.43 & 134.63 & 127.31 & 122.18 & 118.50 & 115.40 & 113.02 \\
        LightNet-B/4 & 126.37 & 93.86 & 81.44 & 74.11 & 68.80 & 65.09 & 62.34 & 59.81 \\
        LightNet-B/2 & 104.19 & 74.27 & 59.60 & 51.22 & 45.70 & 41.65 & 38.60 & 36.45 \\
        DiT-B/2 & - & - & - & - & - & - & - & 42.76 \\
    \midrule
        LightNet-L/8 & 157.76 & 130.29 & 116.06 & 107.50 & 101.10 & 96.47 & 92.79 & 89.51 \\
        LightNet-L/4 & 104.18 & 77.02 & 64.55 & 56.16 & 49.99 & 45.58 & 41.91 & 37.54 \\
        LightNet-L/2 & 84.38 & 48.98 & 35.32 & 28.05 & 23.75 & 21.06 & 18.94 & 17.42 \\
        DiT-L/2 & - & - & - & - & - & - & - & 24.37 \\
    \midrule
        LightNet-XL/8 & 158.75 & 129.23 & 114.72 & 105.75 & 99.35 & 94.53 & 90.66 & 87.22 \\
        LightNet-XL/4 & 101.39 & 70.84 & 56.75 & 48.04 & 42.04 & 37.43 & 34.16 & 31.51 \\
        LightNet-XL/2 & 79.22 & 45.46 & 31.61 & 25.55 & 21.37 & 18.74 & 16.84 & 15.52 \\
        DiT-XL/2 & - & - & - & - & - & - & - & 19.20 \\
    \bottomrule
    \end{tabular}
    \end{threeparttable}
\end{table}

\subsection{Configurations}
In this section, we provide training configurations for all experiments. The configuration for Bidirectional Language Modeling is the same as~\citep{2212.14034}, while the configurations for the other experiments are as shown in Table~\ref{table:config},~\ref{config:llm},~\ref{config:ig},~\ref{config:im}. 
We use Pytorch~\citep{1912.01703} and A100 for training.

\begin{table*}[!ht]
\small
\center
\setlength{\tabcolsep}{0.5cm}
{
\caption{\textbf{Comprehensive Configurations of the Model and Training Procedures for LightNet Experiments} ``Total batch size'' means $\mathrm{batch\_per\_gpu} \times \mathrm{update\_freq} \times \mathrm{num\_gpus}$; ``ALM'' stands for Autoregressive Language Model; ``IM'' stands for Image Modeling, ``IG`` stands for image generation.}
\label{table:config}
\begin{tabular}{l|l|l|l}
\toprule
 & ALM  & IM & IG   \\
\midrule
Dataset                                              & WikiText-103    & ImageNet-1k   & ImageNet-1k      \\
Tokenizer method & BPE    & -  & -  \\
Src Vocab size & 50265   & - & -\\
Sequence length    & 512   & -  & -       \\
Total batch size & 128   &
2048  &
256  \\
Number of updates/epochs                            & 50k updates   & 300 epochs & 80 epochs   \\
Warmup steps/epochs    & 4k steps    & 20 epochs   & -   \\
Peak learning rate   & 5e-4     &5e-4  &1e-4                          \\
Learning rate scheduler  & Inverse sqrt    &Cosine  &-                   \\
Optimizer   & Adam  &Adamw    &Adamw                                    \\
Adam $\epsilon$   & 1e-8   & 1e-8  & 1e-8                           \\
Adam $(\beta_1,\beta_2)$                          & (0.9, 0.999)   & (0.9, 0.98)   & (0.9, 0.98)           \\
Weight decay       & 0.1 & 0.1 for Base, else 0.05 & 0\\                                       
Gradient clipping                                            &  -  & 5.0        & -                                       \\
GPUS & 4 & 8 & 8 \\
\bottomrule
\end{tabular}}

\end{table*}

\begin{table}[!ht]

\caption{
\textbf{Configurations for LLM}}
\label{config:llm}
\small
    \centering
    \begin{tabular}{cccccccc}
    \toprule
        Params(B) & Layers & Hidden Dim  & L.R. & Batch Size Per GPU& SeqLen & GPUs \\ \midrule
        0.15 & 15 & 768  & 3.00E-04 & 26 & 2048 & 8 \\
        0.385 & 26 & 1024& 3.00E-04 & 15 & 2048 & 8 \\ 
        1.0 & 18 & 2048  & 3.00E-04 & 10 & 2048 & 16 \\ 
        2.9 & 36 & 2560 & 3.00E-04 & 36 & 2048 & 48 \\ 
    \bottomrule
    \end{tabular}

\end{table}

\begin{table}[!ht]
\caption{
\textbf{Model Configurations for Image Generation task.}}
\label{config:ig}
    \centering
    \begin{tabular}{ccccc}
    \hline
        Model & Layers & Hidden Dim & Heads & Params \\ \hline
        LightNet-S & 18 & 384 & 6 & 33M \\ 
        LightNet-B & 18 & 768 & 6 & 131M \\ 
        LightNet-L & 36 & 1024 & 16 & 470M \\ 
        LightNet-XL & 42 & 1152 & 16 & 680M \\ \hline
    \end{tabular}
\end{table}

\begin{table}[!ht]
\caption{
\textbf{Model Configurations for Image Classification task.}}
\label{config:im}
    \centering
    \begin{tabular}{ccccc}
    \hline
        Model & Layers & Hidden size & Heads & Params \\ \hline
        LightNet-T & 12 & 192 & 6 & 6.0M \\ 
        LightNet-S & 12 & 384 & 16 & 22.6M \\ 
        LightNet-B & 12 & 768 & 16 & 87.7M \\ \hline
    \end{tabular}
\end{table}

\end{document}